\documentclass[11pt]{article}

\usepackage[final]{acl}

\usepackage{times}
\usepackage{latexsym}
\usepackage{amsmath}
\usepackage{amsfonts}
\usepackage{subcaption}
\usepackage{booktabs}
\usepackage{array}
\usepackage[most]{tcolorbox}
\usepackage{hyperref}

\usepackage[T1]{fontenc}

\usepackage[utf8]{inputenc}

\usepackage{microtype}

\usepackage{inconsolata}

\usepackage{graphicx}

\usepackage{am  sthm}
\newtheorem{theorem}{Theorem}

\usepackage{multirow}

\title{Pre-Generation Hallucination Detection in Large Language Models via Soft-Target Attention Probing}

\author{Amina Miftakhova \\
  Applied AI Institute \\\And
  Alexey Zaytsev \\
  Applied AI Institute \\}

\begin{document}
\maketitle
\begin{abstract}

Detecting hallucination risk before generation enables abstention, retrieval augmentation, and routing decisions without incurring the cost of decoding.
While prior work has shown that such risk can be estimated from a model's internal representations, existing approaches treat this as binary classification over a single decoded output. We instead formulate it as a risk-estimation problem. Under this formulation, we introduce soft-target supervision based on the empirical answer error rate over stochastically sampled outputs — an estimator we prove to be the unique unbiased minimum-variance estimator of the model's per-prompt error probability under its sampling distribution.
We further adapt attention probing to the pre-generation setting, enabling the detector to selectively aggregate hallucination-relevant prompt representations. Across three question-answering benchmarks and five models, attention probing outperforms linear probing on short-answer tasks. Replacing binary labels with soft-target supervision further and consistently improves detection quality.

Source code: \href{https://anonymous.4open.science/r/soft_targets_for_hallu-A707}{anonymous.4open.science}.

\end{abstract}

\begin{figure*}[t]
    \centering
    \includegraphics[width=\linewidth]{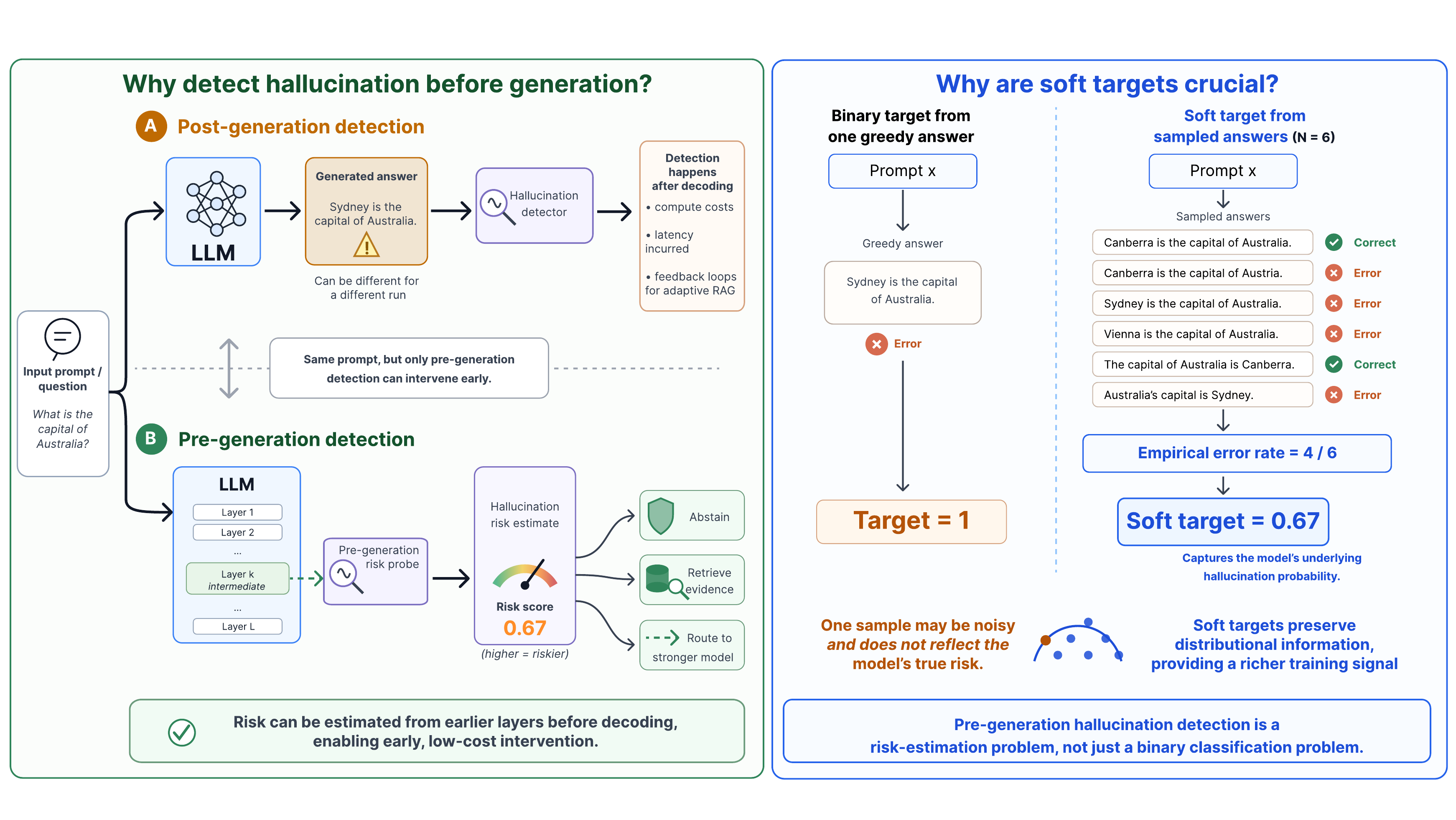}
    \caption{General scheme of pre-generation hallucination detection (left) and soft-target construction (right)}
    \label{fig:big_teaser}
\end{figure*}

\section{Introduction}

Large language models frequently generate confident but factually incorrect responses: a phenomenon known as hallucination~\cite{Huang-Survey}. 
Detecting hallucination risk before generation is especially attractive because it enables abstention, retrieval augmentation, or model routing without paying the full cost of decoding.

Most hallucination detection methods operate post-generation, using uncertainty estimates ~\cite{Malinin2021UncertaintyEI}, semantic consistency across sampled outputs ~\cite{farquhar-se}, or probes over generated representations ~\cite{belinkov-2022-probing,ch-wang-etal-2024-androids}. 
Pre-generation detection has recently received attention as a lightweight alternative ~\cite{alnuhait-etal-2025-factcheckmate, ji-etal-2024-llm, kogilathota-etal-2026-halp}. 
If implemented, it can significantly improve efficiency for adaptive RAG scenarios allowing better selection of sufficient quantity of context ~\cite{moskvoretskii2025adaptive} or even reduce the number of materialized hallucinations ~\cite{alnuhait-etal-2025-factcheckmate}.

However, a key challenge for implementing pre-generation hallucination detector is formulating a training target. 
Existing pre-generation detectors typically assign binary labels derived from a single greedy-decoded answer ~\cite{ji-etal-2024-llm, alnuhait-etal-2025-factcheckmate}. 
In reality, a pre-generation answer correctness appears a random variable: during sampling the same model produces both correct and incorrect answers~\cite{manakul-selfcheckgpt, farquhar-se}, reflecting genuine stochasticity in its output distribution. 
A binary label from a single decoded output is therefore a noisy proxy for the model's underlying error tendency on that input ~\cite{muller-2019-smoothing, guo-2017-calibration}. 

We argue that pre-generation detection is more precisely framed as \emph{risk estimation}: the task is to predict the probability that the model hallucinates on a given prompt under its own answer distribution. 
Under this framing, we propose soft-target supervision derived directly from answer correctness: the training target for each prompt is the empirical fraction of sampled answers that are factually incorrect, estimated at temperature $\tau=1$ to reflect the model's base distribution and provide a consistent difficulty signal across deployment conditions. 
We theoretically and empirically show that sample error rate is an unbiased minimum-variance estimator of the error probability under the model's sampling distribution. 

A second, independent issue concerns representation: how to extract hallucination-relevant signal from hidden states. 
We systematically compare probing approaches in the pre-generation setting, contrasting uniform aggregation with attention probing ~\cite{ch-wang-etal-2024-androids}, which uses a learned query vector to selectively weight token-level hidden states. We further study which layers of the generating model carry the strongest hallucination signal, finding that intermediate layers are consistently most informative, suggesting that reliable risk estimates can be obtained without waiting for the full forward pass to complete.

Our contributions are:
\begin{itemize}
    \item \textbf{Risk-estimation framing and soft-target supervision.} We reframe pre-generation hallucination detection (Figure~\ref{fig:big_teaser}, left) as a risk-estimation problem and introduce soft-target supervision derived from empirical answer error rates (Figure~\ref{fig:big_teaser}, right). We prove that the sample error rate is an unbiased, minimum-variance estimator of the the error probability under the model's sampling distribution.

    \item \textbf{Probing approach and layer analysis.} A systematic comparison of probing architectures shows that attention probing, which uses a learned query vector to selectively weight token-level hidden states, outperforms uniform aggregation in the pre-generation setting. Layerwise analysis further reveals that intermediate layers carry the strongest hallucination signal across models and datasets, thus allowing efficient hallucination detection.


    \item \textbf{Empirical findings of systematic evaluation.} In the pre-generation setting, soft-target supervision provides a consistently stronger training signal than binary labels on open-ended question answering task. We demonstrate that our findings hold across three model families (Qwen, LLaMA, and Gemma) in the sub-9B parameter regime, across architectures of varying scale and design.
\end{itemize}

\section{Related Work}

LLMs can produce fluent text that is factually incorrect or inconsistent with the provided context, a phenomenon commonly referred to as hallucination ~\cite{Huang-Survey, Sahoo2024-survey}. 
Its prevalence in high-stakes settings has motivated a substantial body of work on their detection and mitigation.

\paragraph{Post-generation hallucination detection}

Post-generation methods identify hallucinations after text has been produced and can be broadly grouped into black-box, grey-box, and white-box approaches.

Black-box methods operate on generated text alone: SelfCheckGPT ~\cite{manakul-selfcheckgpt} exploits semantic inconsistency across multiple samples, while LLM-as-a-judge ~\cite{Zheng-2023-llm-judge} approaches use a separate model as an evaluator. 

Grey-box methods additionally access the output distribution. Token-level entropy ~\cite{Malinin2021UncertaintyEI} provides a strong baseline, and semantic entropy ~\cite{farquhar-se} improves on it by clustering sampled responses into equivalence classes and computing entropy over cluster probabilities, yielding higher ROC-AUC on long-form QA.

White-box methods exploit internal model states. A foundational result is that classifiers trained on hidden-layer activations reliably predict whether a statement the model reads or generates is truthful ~\cite{azaria-mitchell-2023-saplma}, showing that veracity is encoded in the model's internal state rather than only in its output distribution. Burns et al. ~\cite{Burns2022DiscoveringLK} push this further with Contrast- Consistent Search (CCS), an unsupervised probe that recovers truth-related directions in activation space by exploiting the consistency constraint that a statement and its negation must be assigned opposite truth values. Linear probes over pooled hidden states ~\cite{belinkov-2022-probing} are the standard supervised baseline. Kossen et al. ~\cite{kossen2025semantic} improve upon them by training lightweight probes to predict semantic entropy directly from hidden states. Orgad et al. ~\cite{orgad2025llms} deepen our understanding of where truthfulness is encoded, finding that the relevant signal is concentrated on specific tokens rather than uniformly distributed, and that error detectors trained this way fail to generalize across datasets, suggesting truthfulness encoding is multifaceted rather than universal. Attention-based methods include Lookback Lens ~\cite{chuang-2024-lookback}, which uses the ratio of attention placed on context versus newly generated tokens, and Xu et al. ~\cite{xu-etal-2023-understanding}, who identify internal hallucination symptoms via contrastive source perturbations. Bazarova et al. ~\cite{toha} detect hallucination through topological divergence between generated text and source context, while RAGLens ~\cite{xiong2026-raglens} trains sparse autoencoders on hidden states to identify interpretable features correlated with unfaithful generation.

\paragraph{Pre-generation hallucination detection}

Pre-generation methods assess hallucination risk from the input representation, before any text is produced. 
This line of work is still relatively recent. 
An early precursor is Kadavath et al. ~\cite{Kadavath2022LanguageM}, who study whether LLMs can estimate the probability that the model knows the answer to a query by examining model self-evaluations. While their approach conditions on a proposed answer and thus is not strictly pre-generative, it motivates the broader question of whether correctness can be predicted before decoding. 
Gottesman and Geva ~\cite{gottesman-geva-2024-estimating} answer this more directly with KEEN, a probe trained on internal representations of subject entities that predicts both QA accuracy and open-ended factuality, without generating a single output token. 
Ji et al. ~\cite{ji-etal-2024-llm} and Alnuhait et al. ~\cite{alnuhait-etal-2025-factcheckmate} train linear probes on input representations with binary labels, establishing strong baselines. 
Kossen et al. ~\cite{kossen2025semantic} show that probes predicting semantic entropy transfer to the pre-generation setting, albeit with reduced accuracy. 
The approach has also been extended to vision-language models: HALP ~\cite{kogilathota-etal-2026-halp} probes pre-generative internal states of VLMs to predict hallucination risk before decoding begins, demonstrating that the principle generalizes across modalities and architectures.

However, the use of binary supervision targets in pre-generation setting is statistically ill-posed: since generation has not yet occurred, hallucination is not a deterministic outcome but a probability over the model's stochastic output distribution. Hard labels collapse this distribution to a single sample, introducing irreducible label noise. Our work addresses this directly by framing pre-generation detection as estimation of hallucination probability, and training probes on soft targets derived from the output distribution.

\section{Methodology}
\label{sec:methodology}

\subsection{Problem Statement}

We study hallucination detection before decoding begins. Given an input prompt $x$, a language model induces a distribution over possible answers $a \sim p_\theta(\cdot \mid x)$. In the pre-generation setting, the quantity of interest is not whether a single sampled answer is hallucinated, but rather the model's hallucination risk under its own answer distribution. We therefore treat hallucination detection as a supervised prediction problem over the model's internal representations, where the target is an estimate of the probability that the model will produce an incorrect answer for the given prompt.

Formally, let $H(x_i) = \{h_j^{(l)}\}_{j=1,\, l=1}^{n, L}$ denote the full set of hidden states produced by the model for all $n$ input tokens across all $L$ layers. The hallucination risk detection model approximates:
\begin{align*}
    p_i^* &= \mathbb{E}_{a \sim p_\theta(\cdot \mid x_i)}\bigl[\mathbf{1} [\text{incorrect}(a)]\bigr] \\
        &= \sum_a p_\theta(a \mid x_i)\,\mathbf{1}[ \text{incorrect}(a)]
\end{align*}
using a chosen subset of $H(x_i)$ to produce $\hat{p}_i^*(H(x_i))$. 
The probe is a layer-agnostic framework: it can be applied to the hidden states of any single layer $l$, and we study the effect of this choice via a layerwise ablation in the Results ~\ref{subs:layer_analysis} section.

\subsection{Target Construction}

\paragraph{Binary targets.} In the binary setting, the target is $y_i \in \{0, 1\}$, derived from the single greedy-decoded answer $a^\dagger = \arg\max_a\, p_\theta(a \mid x_i)$:
$$y_i^{\text{greedy}} = \mathbf{1}[\text{incorrect}(a^\dagger)]$$
Since $y_i^{\text{greedy}}$ is a deterministic constant, its bias relative to the true target $p_i^*$ is:
$$\mathrm{Bias}(y_i^{\text{greedy}}) = \mathbf{1}[\text{incorrect}(a^\dagger)] - p_i^*$$
This mismatch associated with the variance of generation is structural and irreducible. It can be as large as $\pm 1$ in the extreme cases described below.

\begin{enumerate}
    \item \textbf{Underestimation}: the model assigns its single highest probability mass to a correct answer while distributing mass $1 - \epsilon$ across many incorrect ones, giving $y_i^{\text{greedy}} = 0$ but $p_i^* \approx 1$;
    
    \item \textbf{Overestimation}: the model's top answer is an incorrect paraphrase while nearly all probability mass lies on correct variants, giving $y_i^{\text{greedy}} = 1$ but $p_i^* \approx 0$.
\end{enumerate}

In both cases the mode-mean gap $\delta_i = y_i^{\text{greedy}} - p_i^*$ is large. Crucially, $\delta_i$ is question-specific and need not correlate with the true risk $p_i^*$. Thus, a probe trained on greedy targets may learn to predict decoding artifacts rather than the genuine hallucination risk of the model.

\paragraph{Soft targets.} To address this limitation, we derive soft targets from the empirical error rate over $N$ answers sampled via ancestral sampling from the tempered distribution $p_\theta^\tau(\cdot \mid x_i)$:
$$\hat{y}_i = \frac{1}{N}\sum_{j=1}^{N} Z_j, \quad Z_j = \mathbf{1}[\text{incorrect}(a_i^{(j)})]$$
At a fixed temperature $\tau = \hat{\tau}$, this estimator is unbiased for $p_i^*(\hat{\tau}) = \mathbb{E}_{a \sim p_\theta^{\hat{\tau}}}[\mathbf{1}[\text{incorrect}(a)]]$ by linearity of expectation and the independence of samples. Furthermore, $\hat{y}_i$ is a uniformly minimum-variance unbiased estimator (UMVUE) of $p_i^*(\hat{\tau})$; we provide a formal proof further in the Paragraph~\ref{par:soft_target}.

We construct soft targets at $\tau = 1$. At this temperature, $p_\theta^{\tau=1} \equiv p_\theta$, making $p_i^*(1)$ the error probability under the model's original learned distribution and providing the most principled default for soft-target construction, free from any temperature hyperparameter.

\paragraph{The empirical error rate is the UMVUE of the model's error 
probability.}
\label{par:soft_target}

Let $x_i$ be a fixed input and let the model induce a distribution $p_\theta(\cdot \mid x_i)$ at temperature $\tau = 1$, i.e., the raw model distribution. Define the target quantity:
$$p_i^* = \Pr_{a \sim p_\theta(\cdot \mid x_i)}[\mathrm{incorrect}(a)] \in (0, 1).$$
Assuming each answer $a^{(j)}$ is drawn independently from $p_\theta(\cdot \mid x_i)$, the indicators $Z_j = \mathbf{1}[\mathrm{incorrect}(a^{(j)})]$ are i.i.d.\ $\mathrm{Bernoulli}(p_i^*)$, and the soft target is $\hat{y}_i = \frac{1}{N}\sum_{j=1}^N Z_j$.

\textbf{Unbiasedness.} By linearity of expectation,
$$\mathbb{E}[\hat{y}_i] = \frac{1}{N}\sum_{j=1}^N \mathbb{E}[Z_j] = \frac{1}{N}\sum_{j=1}^N p_i^* = p_i^*.$$

\textbf{UMVUE.} Let $T = \sum_{j=1}^N Z_j$. The joint likelihood is
$$\mathcal{L}(p_i^* \mid Z_1,\dots,Z_N) = (p_i^*)^{T}(1-p_i^*)^{N-T},$$
which depends on the data only through $T$. By the Fisher-Neyman factorization theorem ~\cite{lehmann1998theory}, $T$ is sufficient for $p_i^*$. The Binomial family $\{\mathrm{Binomial}(N, p) : p \in (0,1)\}$ is complete, since it forms a one-parameter exponential family with natural parameter $\log\frac{p}{1-p}$ ~\cite{lehmann1998theory}. Since $\hat{y}_i = T/N$ is an unbiased function of a complete sufficient statistic, the Lehmann-Scheff\'e theorem implies that $\hat{y}_i$ is the unique UMVUE of $p_i^*$ ~\cite{lehmann1998theory}. \hfill$\square$

We also emphasize the importance of variance reduction for the speed of convergence for stochastic gradient methods, as the convergence speed is linear in $\frac{1}{\mathrm{Var}(\mathbf{g})}$ with $\mathbf{g}$ being the stochastic gradient, that is proportional to variance of $\hat{y}_i$, which we minimize for convex~\cite{bottou2018optimization} and non-convex~\cite{yan2018unified} optimization problem cases.
From a statistical learning theory perspective, we can also directly measure the benefit of soft-targets of similar order in Appendix~\ref{sec:stl_motivation}.

\subsection{Attention Probing for Pre-Generation Setting}

Attention probing was originally developed by CH-Wang et al.~\cite{ch-wang-etal-2024-androids} for post-generation hallucination detection, where answer-side hidden states are available. We adapt this mechanism to the pre-generation setting by applying it to prompt-side hidden states instead.

Given the token-level hidden states $\{h_j\}_{j=1}^n$ from the input prompt at a fixed layer $l$, a learnable query vector $q \in \mathbb{R}^d$ produces a weighted aggregation:
$$\hat{h}_i = \sum_{j=1}^n \alpha_{ij}\, h_j, \quad \alpha_{ij} = \frac{\exp(q^\top h_j)}{\sum_k \exp(q^\top h_k)}$$
This allows the probe to focus selectively on the prompt positions most informative for hallucination risk, rather than aggregating all token representations uniformly as in linear probing. A logistic regression head then maps the aggregated representation to a hallucination risk score:
$$\hat{p}_i = \sigma(w^\top \hat{h}_i + b)$$
Both the binary and soft-target variants are trained with binary cross-entropy loss, treating the continuous soft target $\hat{y}_i \in [0,1]$ as the supervision signal directly:
$$
    \mathcal{L} = -\frac{1}{M}\sum_{i=1}^{M}\bigl[\hat{y}_i \log \hat{p}_i + (1 - \hat{y}_i)\log(1 - \hat{p}_i)\bigr].
$$

\subsection{Experimental Combinations}

The supervision target construction and probe architecture define distinct axes of design. We evaluate three combinations: linear probing with binary targets, attention probing with binary targets, and attention probing with soft targets. We focus comparisons on attention-based probes because attention probing consistently outperforms linear probing across all settings, which we verify in Section~\ref{sec:results}. Thus, the soft-target contribution is evaluated in the stronger attention-probe setting.

\section{Results and Discussion}
\label{sec:results}

\subsection{Experimental Setting}

\paragraph{Datasets} We evaluate on the subsets of three datasets: SQuAD ~\cite{rajpurkar-etal-2016-squad}, Natural Questions (NQ) ~\cite{Kwiatkowski2019NaturalQA} (short extractive), and HotpotQA ~\cite{yang-etal-2018-hotpotqa} (multi-hop short answer). The details on constructing the subsets are provided in Appendix ~\ref{app:dataset}.
 
\paragraph{Models.} We evaluate on five instruction-tuned models from three architecture families, all in the sub-9B parameter regime: Qwen2.5-3B, Qwen2.5-7B, Qwen3.5-9B, Llama-2-7B, and Gemma-4-E2B. For all models, we use the instruction-tuned variant and apply probes to hidden states extracted during a single forward pass over the input prompt, with no modification to the model weights.
 
\paragraph{Correctness evaluation.} An answer is considered correct if the ground truth is contained in the response, or if ROUGE-L $> 0.6$ and NLI-based semantic similarity $> 0.85$, following the evaluation approach used by ~\cite{Kuhn-semantic}.
 
\paragraph{Baselines} We compare against three pre-generation baselines:

\begin{enumerate}
    \item Question length heuristic;
    \item Zero-shot self-assessment, obtained by prompting the model to report its own confidence without generating an answer;
    \item Linear probe on mean-pooled hidden states from the last layer ~\cite{alnuhait-etal-2025-factcheckmate}.
\end{enumerate}

We additionally report post-generation results for length-normalized entropy, linear probe, and attention probe. 

Full implementation details for all baselines are given in Appendix ~\ref{app:baselines}.

\paragraph{Probe training} All probes are trained for up to 30 epochs with early stopping on validation ROC-AUC. We use the Adam optimizer with binary cross-entropy. Hyperparameters are selected via grid search: learning rate $\in \{10^{-4}, 5{\times}10^{-5}, 10^{-5}\}$, weight decay $\in \{0, 10^{-5}, 10^{-4}\}$, batch size $\in \{8, 16\}$. Probes are trained independently per model and dataset. Soft targets are constructed with $N=10$ samples at temperature $\tau = 1$.

\subsection{Results}

We present results with ROC AUC values for binary error classification across different datasets and models in Table \ref{tab:model_comparison_all_qa} and a critical difference diagram for pre-generation methods in Figure~\ref{fig:cd_pre_generation}.
Detailed analysis is provided in subsequent paragraphs.

\begin{figure*}[t]
    \centering
    \includegraphics[width=0.7\textwidth]{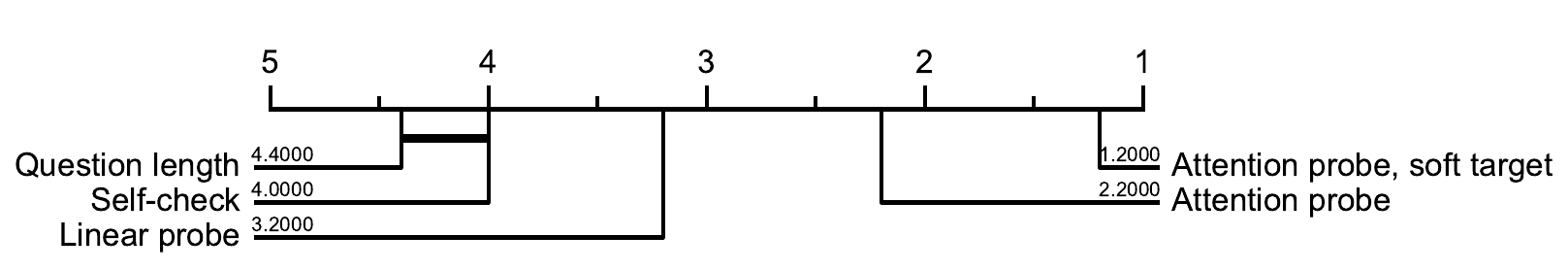}
    \caption{Critical difference diagram for pre-generation hallucination detection methods across dataset--model pairs. Mean ranks over all dataset-model pairs for ROC-AUC of detection are presented, horizontal bars connect not statistically significant differences in ranks for $\alpha = 0.05$.}
    \label{fig:cd_pre_generation}
\end{figure*}

\begin{table*}[t]
\centering
\small
\setlength{\tabcolsep}{4pt}
\begin{tabular}{llccccc}
\toprule
\textbf{Regime} & \textbf{Method} 
& \textbf{Qwen2.5-3B} 
& \textbf{Qwen2.5-7B} 
& \textbf{Qwen3.5-9B} 
& \textbf{Llama-2-7B} 
& \textbf{Gemma-4-E2B} \\
\midrule

\multicolumn{7}{l}{\textbf{SQuAD}} \\
\midrule
\multirow{4}{*}{Post-gen.} 
& Entropy                    & 69.63 & 72.54 & 79.54 & 50.79 & 64.26 \\
& Linear probe               & 58.61 & 68.99 & 76.81 & 79.67 & 73.72 \\
& Attention probe            & 75.32 & \textbf{83.39} & 84.79 & 81.49 & \textbf{79.77} \\
& Attention probe, soft target 
                              & \textbf{80.06} & 83.33 & \textbf{87.54} & \textbf{82.13} & 78.52 \\
\cmidrule(lr){1-7}
\multirow{5}{*}{Pre-gen.}
& Question length            & 52.86 & 56.84 & 51.95 & 55.36 & 51.79 \\
& Self-check, zero-shot      & 57.75 & 51.10 & 57.04 & 52.04 & 51.29 \\
& Linear probe               & 56.45 & 58.64 & 70.27 & 63.99 & 65.37 \\
& Attention probe            & 65.39 & 76.95 & 85.19 & 73.93 & 73.23 \\
& Attention probe, soft target 
                              & \textbf{69.24} & \textbf{80.15} & \textbf{85.66} & \textbf{77.01} & \textbf{78.00} \\

\midrule
\multicolumn{7}{l}{\textbf{Natural Questions}} \\
\midrule
\multirow{4}{*}{Post-gen.}
& Entropy                    & 52.80 & 51.08 & 65.10 & 50.51 & 63.63 \\
& Linear probe               & 64.69 & 70.06 & 71.37 & 66.97 & 69.68 \\
& Attention probe            & 66.62 & 72.64 & 76.61 & \textbf{75.00} & 72.22 \\
& Attention probe, soft target 
                              & \textbf{67.98} & \textbf{74.43} & \textbf{77.28} & 74.36 & \textbf{72.45} \\
\cmidrule(lr){1-7}
\multirow{5}{*}{Pre-gen.}
& Question length            & 50.04 & 53.06 & 52.20 & 51.20 & 53.51 \\
& Self-check, zero-shot      & 57.69 & 59.18 & 60.86 & 47.68 & 56.78 \\
& Linear probe               & 58.56 & 63.41 & 63.87 & 61.46 & 59.38 \\
& Attention probe            & 62.79 & 68.98 & 78.97 & 72.94 & 70.22 \\
& Attention probe, soft target 
                              & \textbf{66.30} & \textbf{71.18} & \textbf{81.52} & \textbf{74.13} & \textbf{71.56} \\

\midrule
\multicolumn{7}{l}{\textbf{HotpotQA}} \\
\midrule
\multirow{4}{*}{Post-gen.}
& Entropy                    & 53.36 & 61.61 & 66.84 & 52.06 & 59.48 \\
& Linear probe               & 65.62 & 72.09 & 67.42 & 67.38 & 68.67 \\
& Attention probe            & 62.82 & 72.72 & \textbf{74.76} & \textbf{72.77} & \textbf{69.31} \\
& Attention probe, soft target 
                              & \textbf{68.09} & \textbf{72.73} & \textbf{74.76} & \textbf{72.77} & \textbf{69.31} \\
\cmidrule(lr){1-7}
\multirow{5}{*}{Pre-gen.}
& Question length            & 50.87 & 59.53 & 50.33 & 51.26 & 52.60 \\
& Self-check, zero-shot      & 54.93 & \textbf{59.30} & 60.73 & 51.21 & 52.80 \\
& Linear probe               & 56.91 & 56.63 & 58.82 & 59.61 & 56.56 \\
& Attention probe            & 54.41 & 58.10 & \textbf{74.84} & 60.13 & \textbf{63.29} \\
& Attention probe, soft target 
                              & \textbf{62.53} & 58.79 & \textbf{74.84} & \textbf{65.48} & \textbf{63.29} \\

\bottomrule
\end{tabular}

\caption{Performance comparison across datasets, models, and hallucination detection methods, measured by ROC-AUC. Results are grouped by dataset and by whether detection is performed in classic setting after generation (Post-gen.) or in our setting before generation (Pre-gen.).}
\label{tab:model_comparison_all_qa}
\end{table*}

Figure~\ref{fig:cd_pre_generation} summarizes the relative performance of the pre-generation hallucination detection methods using a critical difference diagram~\cite{IsmailFawaz2019deep,toha}. 
Each method is ranked within every dataset--model pair according to ROC-AUC, and the reported position corresponds to its average rank across all evaluation pairs. Lower ranks indicate better performance. 
The attention probe trained with soft targets achieves the best average rank, followed by the standard attention probe, other approaches perform substantially worse. 
Moreover, the performance gain between the attention probe with soft targets and other methods is statistically significant.

\paragraph{Pre-generation performance scales with answer determinism.} 
Pre-generation detection performance is strong across diverse datasets. On SQuAD, the difference between the best pre- and post-generation methods is from 10.82 (-13.51\%) to 0.52 (-0.07\%) ROC-AUC points across models (average difference of -4.32\%). On HotpotQA it increases, reaching up to 13.94 (-19.17\%) point difference (-6.55\% on average). 
These results suggest that when answers are short and structurally constrained, a substantial portion of the model's uncertainty is already encoded in the input representations, allowing pre-generation probes to perform competitively with post-generation methods. 
As expected answer length and reasoning complexity increase, uncertainty becomes more dependent on the generation process itself, widening the gap in favour of post-generation detection.

\paragraph{Soft targets improve open-ended tasks in the pre-generation setting.} 
On all three datasets, attention probing trained with soft targets outperforms its binary-target counterpart across all evaluated models (Table ~\ref{tab:model_comparison_all_qa}). The gains are most pronounced on SQuAD, where the ROC-AUC increases from 85.19 to 85.66 (+0.55\%) in the worst case for Qwen3.5-9B and from 73.23 to 78.00 (+6.51\%) in the best case for Gemma-4-E2B. On HotpotQA, in the worst case ROC-AUC does not change for Qwen3.5-7B and Gemma-4-E2B, and in the best case rises from 60.13 to 65.48 (+8.90\%) for Llama-2-7B. However, there are no such consistent improvements in the post-generation setting. These results are consistent with the risk-estimation framing introduced in Section ~\ref{sec:methodology}. Soft targets provide a closer approximation to the model's true per-prompt error probability than a single greedy label, and this additional information is most useful in the pre-generation setting, where the probe must infer distributional uncertainty from input representations alone. We compare the empirical error rate against alternative soft-target formulations in Appendix ~\ref{app:soft_target_formulations}, demonstrating that the error rate consistently performs best, in agreement with its theoretical grounding as an UMVUE of the per-prompt error probability.

\paragraph{Attention probing dominates linear probing in the pre-generation hallucination detection.} The advantage of attention probing over linear is consistent across datasets, where the difference reaches up to 23.1 ROC-AUC points (Table ~\ref{tab:model_comparison_all_qa}). The experimental evidence is consistent with the interpretation that the learned attention mechanism selectively aggregates the prompt positions most informative for hallucination risk, capturing signal that uniform pooling discards. 
On single-token tasks such as multiple-choice or boolean QA, however, as demonstrated in Appendix ~\ref{app:additional_results}, this advantage disappears. The uncertainty signal appears to be sufficiently captured by mean pooling alone, and attention probing provides no additional benefit.

\subsection{Layer Analysis}
\label{subs:layer_analysis}

\begin{figure*}[t]
    \centering
    \includegraphics[width=\linewidth]{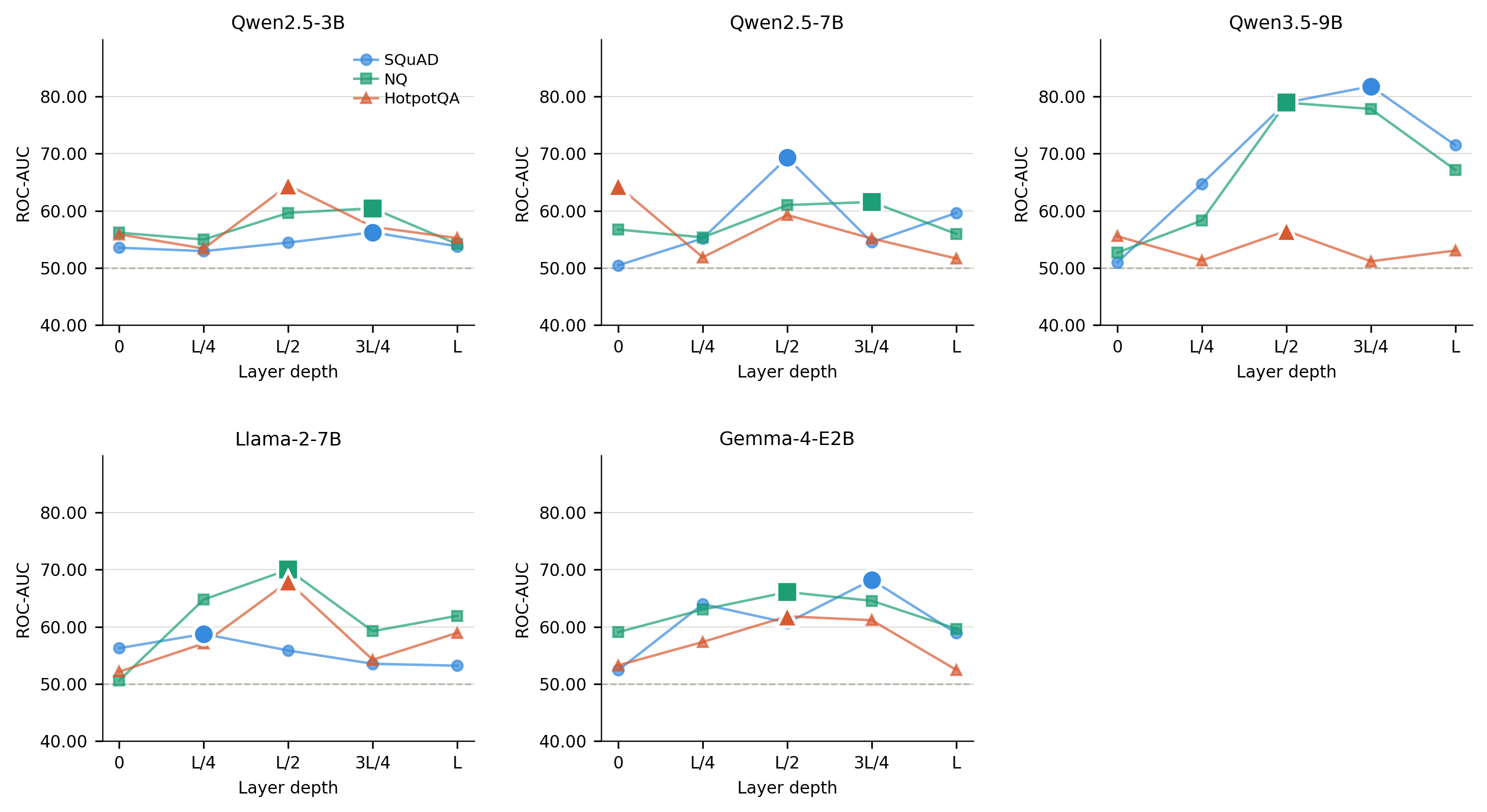}
    \caption{Performance of attention probe with soft targets trained and tested on hidden states from the different layer depth across three datasets. Larger markers indicate the best-performing layer per model and dataset.}
    \label{fig:first}
\end{figure*}

To examine how detection quality varies across the transformer's depth, we train probes using hidden states extracted from a representative subset of layers $\{1,\, \lfloor L/4 \rfloor,\, \lfloor L/2 \rfloor,\, \lfloor 3L/4 \rfloor,\, L\}$ of the generating model. Due to the time and resources constraints, this evaluation was done on the validation and test splits of the data. The results are shown in Figure ~\ref{fig:first}.

Across models and datasets, hallucination-relevant information is distributed throughout the network rather than concentrated in a single layer. The best-performing layer varies by model and task, but intermediate layers are consistently among the most informative, suggesting that useful pre-generation signal emerges well before the final representation and can be extracted without a full forward pass to the last layer.

\subsection{Discussions}

Taken together, the results support three conclusions. First, pre-generation hallucination detection is most reliable when the answer space is short and structurally constrained. On such tasks, the probe's ROC-AUC approaches that of post-generation methods. Second, attention probing is the stronger lightweight architecture in this setting, with gains that are largest precisely where the task provides the most variable prompt structure. Third, soft-target supervision derived from the empirical error rate provides a meaningful and consistent improvement over binary labels on open-ended tasks, with the advantage diminishing as answer determinism increases. This pattern aligns directly with the theoretical motivation in Section~\ref{sec:methodology}.

The primary limitation is the degradation in relative performance as task complexity and answer length increase. On long-context, multi-hop tasks, the detector must infer hallucination risk from more diffuse evidence spread across a longer prompt, and post-generation methods that condition on the generated answer retain a clear advantage. This is not a weakness of the specific probe design but a structural consequence of the pre-generation setting: without access to the generated answer, some uncertainty cannot be resolved. This limitation clarifies the boundary conditions of the method and motivates future work on context-aware probing and integration with abstention or retrieval mechanisms.

\section{Conclusions}

We studied pre-generation hallucination detection through the lens of risk estimation, arguing that the model's per-prompt error probability is a more principled supervision target than a binary label derived from a single greedy output. 
Our central contribution is soft-target supervision based on the empirical answer error rate —-- the unique unbiased minimum-variance estimator of that quantity under the model's sampling distribution. 


Across the open-ended tasks we evaluated, soft-target supervision consistently improves detection quality over binary labels, and the empirical error rate outperforms the alternative soft-target formulations we considered. 
The consistent gains from soft-target training provide a direct empirical evidence that hallucination in the pre-generation setting is fundamentally a distributional property of the model, supporting the risk-estimation view at empirical level.
Our experiments further demonstrate that attention probing is a strong and parameter-efficient mechanism for extracting this risk signal from prompt-side hidden states. Learned attention aggregation consistently outperforms mean-pool probing on short open-ended tasks.
Layerwise analysis shows that hallucination-relevant information is distributed across the network, with intermediate layers frequently among the most informative.
Thus, strong detection signals emerge before the full forward pass completion~\cite{orgad2025llms}.
Finally, when answers are short and structurally constrained, pre-generation probes approach post-generation performance while avoiding the cost of decoding. 
As answer length increases, post-generation methods retain a clear advantage, as uncertainty becomes more dependent on the generation process itself ~\cite{farquhar-se, kossen2025semantic}. 

The broader implication is that pre-generation hallucination detection provides a reliable, interpretable, and earlier signal if augmented with soft-target training. This result, combined with the finding that intermediate layers are frequently the most informative, points toward a practically relevant operating regime: on tasks with short structured answers, reliable hallucination risk estimates can be obtained before generation and partway through the forward pass.

The scope and limitations of these conclusions are discussed in Section ~\ref{sec:limitations}. Natural directions for future work include extending the risk-estimation framework to to exploit the intermediate-layer finding for early-exit inference ~\cite{schuster-2022} and integrating pre-generation risk scores with abstention ~\cite{Kadavath2022LanguageM} or retrieval-augmented generation ~\cite{Asai2023SelfRAGLT} policies.

\section{Limitations}
\label{sec:limitations}

We group the limitations of this work into three categories: theoretical assumptions underlying the framework, empirical scope of the evaluation, and practical constraints on applicability.

\paragraph{Theoretical assumptions.}

\begin{enumerate}
    \item The UMVUE argument assumes that correctness is a well-defined binary property of each answer. For truly open-ended or creative tasks where correctness is graded or subjective, this assumption breaks down and the framework is not directly applicable; the semantic entropy framework ~\cite{farquhar-se} may be more appropriate in such settings.

    \item Soft targets are defined relative to a fixed sampling temperature. The paper evaluates targets constructed at $\tau = 1$, which recovers the model's base distribution, but the probe's performance under a mismatch between training temperature and deployment temperature has not been characterized. Practitioners deploying models at temperatures other than $\tau = 1$ should treat soft-target estimates with appropriate caution.

    \item The soft-target construction assumes a reliable correctness-checking pipeline. Errors in automated evaluation introduce noise into the targets and, if systematic, can bias the soft-target estimator away from the true error probability, violating the unbiasedness guarantee. We use automated evaluation that occasionally produces incorrect judgments. The approach is therefore best suited to tasks where correctness can be verified reliably and should be applied with caution on tasks with ambiguous or graded answers.
\end{enumerate}

\paragraph{Empirical scope.}

\begin{enumerate}
    \setcounter{enumi}{3}
    \item We evaluate models up to 7B parameters due to computational constraints. Larger models may exhibit different layer-wise distributions of hallucination-relevant signal — in particular, the optimal probing layer may shift deeper as model capacity increases — and whether the soft-target and attention-probing findings generalize to that regime remains an open question.

    \item All datasets are in English. Hallucination behaviour may differ across languages, particularly in low-resource settings where the model's output distribution is less calibrated.

    \item The evaluation covers ancestral sampling and greedy decoding. Robustness of the soft-target framework under other decoding strategies (e.g., beam search, nucleus sampling with varying $p$) has not been assessed.

    \item Further experiment would further clarify the usefulness of detected halluciations for their mitigation within mitigation or adaptive RAG frameworks. 
\end{enumerate}

\paragraph{Practical constraints.}

\begin{enumerate}
    \setcounter{enumi}{6}
    \item The method operates in the white-box setting and requires access to intermediate layer activations from a transformer-based model with a token-level residual stream. It is not applicable to black-box API models or to architectures that do not expose token-level hidden states in this form (e.g., state-space models).

    \item Pre-generation detection is inherently constrained by the absence of answer-side evidence. On tasks requiring deep multi-step reasoning or long-form synthesis, uncertainty becomes substantially dependent on the generation process itself, and post-generation methods retain a clear advantage. The appropriate interpretation is that pre-generation detection provides an earlier and cheaper signal that complements rather than replaces post-generation detectors.
\end{enumerate}

\section{Ethics Statement}

\subsection{Use of AI Assistants}

During the preparation of this manuscript, AI-assisted writing tools were used for proofreading and critical review of individual sections. As a result, some parts of the paper may be classified as AI-generated, AI-edited, or a mix of human and AI contributions. All scientific content, experimental design, results, and conclusions are the sole responsibility of the authors.

\bibliography{custom}

\appendix

\section{Statistical learning theory justification for soft-targets}
\label{sec:stl_motivation}

Our results shows that, under
cross-entropy training, empirical soft targets provide an unbiased estimate of
the ideal objective with target \(p^\star(x)\), the conditional hallucination
risk of the model. 
Since the cross-entropy loss is linear in the target,
averaging over sampled generations does not change the population objective;
instead, it reduces the sampling-noise component of the uniform generalization
bound at rate \(O((mN)^{-1/2})\). 
This explains why soft-target supervision
provides a more stable learning signal than a single greedy or sampled binary
label in pre-generation hallucination detection.
The proof and the theorem are natural extensions of existing results for statistical learning theory~\cite{mohri2018foundations}.

\begin{theorem}
\label{thm:soft_targets_ce_generalization}
Let \(X\sim\mathcal{D}\) denote a prompt, and let
\(Y\in\{0,1\}\) indicate whether a sampled answer is incorrect. Define the
conditional hallucination risk
\[
    p^\star(x)=\mathbb{P}(Y=1\mid X=x).
\]
Let \(\mathcal{F}\) be a finite class of predictors
\(f:\mathcal{X}\to[\varepsilon,1-\varepsilon]\), for some
\(\varepsilon\in(0,1/2)\). Given \(m\) prompts
\(x_1,\ldots,x_m\), suppose that for each prompt we draw \(N\) independent
generations
\[
    Y_{ij}\mid X=x_i \sim \mathrm{Bernoulli}(p^\star(x_i)),
\]
with $j = 1, \ldots, N$,
and construct the soft target
\[
    \widehat p_i = \frac{1}{N}\sum_{j=1}^N Y_{ij}.
\]
Consider the empirical soft-target cross-entropy objective
\[
    \widehat R_N(f)
    =
    \frac{1}{m}\sum_{i=1}^m
    \ell(f(x_i),\widehat p_i),
\]
where
\[
    \ell(q,p)
    =
    -p\log q -(1-p)\log(1-q).
\]
Let the population risk be
\[
    R(f)
    =
    \mathbb{E}_{X\sim\mathcal{D}}
    \bigl[
        \ell(f(X),p^\star(X))
    \bigr].
\]
Then, for every fixed \(f\in\mathcal{F}\),
\[
    \mathbb{E}\!\left[
        \widehat R_N(f)\mid x_1,\ldots,x_m
    \right]
    =
    \frac{1}{m}\sum_{i=1}^m
    \ell(f(x_i),p^\star(x_i)).
\]
Thus, the empirical soft-target cross-entropy is an unbiased estimate of the
ideal cross-entropy objective with target \(p^\star(x)\).

Moreover, with probability at least \(1-\delta\), uniformly over
\(f\in\mathcal{F}\),
\[
\begin{aligned}
& \left|
    \widehat R_N(f) - R(f)
\right| \\
&\le
    \sup_{f\in\mathcal{F}}
    \left|
        \frac{1}{m}\sum_{i=1}^m
        \ell(f(x_i),p^\star(x_i))
        -
        R(f)
    \right| \\
&\quad +
    \log\!\left(\frac{1-\varepsilon}{\varepsilon}\right)
    \sqrt{
        \frac{\log(2|\mathcal{F}|/\delta)}{2mN}
    } .
\end{aligned}
\]
Consequently, if
\[
    \widehat f_N \in \arg\min_{f\in\mathcal{F}} \widehat R_N(f)
\]
and
\[
    f^\star \in \arg\min_{f\in\mathcal{F}} R(f),
\]
then, with probability at least \(1-\delta\),
\[
\begin{aligned}
    &R(\widehat f_N)-R(f^\star) \\
    &\le
    2 \sup_{f\in\mathcal{F}}
    \left|
        \frac{1}{m}\sum_{i=1}^m
        \ell(f(x_i),p^\star(x_i))
        -
        R(f)
    \right| \\
    &+
    2\log\!\left(\frac{1-\varepsilon}{\varepsilon}\right)
    \sqrt{
        \frac{\log(2|\mathcal{F}|/\delta)}{2mN}
    } .
\end{aligned}
\]
Therefore, increasing the number of sampled generations \(N\) reduces the
sampling-noise component of the learning bound at rate \(O((mN)^{-1/2})\).
The case \(N=1\) corresponds to supervision from a single sampled binary
label.
\end{theorem}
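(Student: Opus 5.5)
The plan rests on one observation: the cross-entropy $\ell(q,p)=-p\log q-(1-p)\log(1-q)$ is affine in its second argument. We can write $\ell(q,p)=-\log(1-q)-p\,\log\frac{q}{1-q}$.

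\textbf{Unbiasedness.} Conditioning on $x_1,\ldots,x_m$, the earlier UMVUE paragraph (Paragraph~\ref{par:soft_target}) gives $\mathbb{E}[\widehat p_i\mid x_i]=p^\star(x_i)$. Because $\ell(f(x_i),\cdot)$ is affine with coefficients depending only on $x_i$, linearity of expectation yields $\mathbb{E}[\ell(f(x_i),\widehat p_i)\mid x_i]=\ell(f(x_i),p^\star(x_i))$. Averaging over $i$ proves the first claim.

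\textbf{Uniform deviation bound.} I will use the triangle decomposition
\[
\bigl|\widehat R_N(f)-R(f)\bigr|\le \Bigl|\widehat R_N(f)-\tfrac1m\textstyle\sum_i\ell(f(x_i),p^\star(x_i))\Bigr|+\Bigl|\tfrac1m\textstyle\sum_i\ell(f(x_i),p^\star(x_i))-R(f)\Bigr|.
\]
The second term is bounded by its supremum over $\mathcal{F}$, which is exactly the first term of the stated bound. For the first term, the affine form gives
\[
\widehat R_N(f)-\tfrac1m\textstyle\sum_i\ell(f(x_i),p^\star(x_i))=-\frac{1}{mN}\sum_{i,j}c_i(f)\bigl(Y_{ij}-p^\star(x_i)\bigr),
\]
where $c_i(f)=\log\frac{f(x_i)}{1-f(x_i)}$. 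Since $f\in[\varepsilon,1-\varepsilon]$, we have $|c_i(f)|\le\log\frac{1-\varepsilon}{\varepsilon}$.

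Conditionally on the prompts, this is a sum of $mN$ independent, mean-zero terms. Each summand $c_i(f)Y_{ij}/(mN)$ ranges over an interval of length at most $\log\frac{1-\varepsilon}{\varepsilon}/(mN)$. Hoeffding's inequality therefore gives, for fixed $f$,
\[
\Pr(|\cdot|>t)\le 2\exp\!\bigl(-2mNt^2/\log^2\tfrac{1-\varepsilon}{\varepsilon}\bigr).
\]
A union bound over the finite class $\mathcal{F}$ and setting the right-hand side to $\delta$ gives $t=\log\frac{1-\varepsilon}{\varepsilon}\sqrt{\log(2|\mathcal{F}|/\delta)/(2mN)}$. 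The bound holds conditionally for every realization of the prompts, hence unconditionally with probability at least $1-\delta$.

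\textbf{Excess risk.} On the same event, I use the standard ERM chain
\[
R(\widehat f_N)-R(f^\star)=\bigl[R(\widehat f_N)-\widehat R_N(\widehat f_N)\bigr]+\bigl[\widehat R_N(\widehat f_N)-\widehat R_N(f^\star)\bigr]+\bigl[\widehat R_N(f^\star)-R(f^\star)\bigr].
\]
The middle bracket is $\le 0$ by definition of $\widehat f_N$. Each outer bracket is bounded by the uniform deviation bound, which produces the factor $2$. The $N=1$ remark is the specialization of this bound to a single label.

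\textbf{Main obstacle.} The key step is recognizing the affine-in-target structure. This structure is what turns the label noise into a sum of $mN$ independent bounded terms rather than $m$ terms, and that is the source of the $(mN)^{-1/2}$ rate. Some care is needed to condition on the prompts so that Hoeffding applies across all $i,j$ jointly. No independence between $\widehat f_N$ and the noise is needed, because the bound is uniform over $\mathcal{F}$.
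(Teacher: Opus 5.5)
Your proposal is correct and follows the paper's proof essentially step for step. It uses linearity of $\ell$ in its target for unbiasedness, and the identity $\ell(f(x_i),\widehat p_i)-\ell(f(x_i),p^\star(x_i))=(\widehat p_i-p^\star(x_i))\log\frac{1-f(x_i)}{f(x_i)}$ with the $\log\frac{1-\varepsilon}{\varepsilon}$ bound. It then applies Hoeffding with a union bound over $\mathcal{F}$, and the standard two-sided ERM argument. The only cosmetic difference is that you apply Hoeffding directly to the $mN$ conditionally independent bounded summands, where the paper phrases it via the sub-Gaussian proxy $1/(4N)$ for $\widehat p_i-p^\star(x_i)$; both give the same constant.
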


\begin{proof}
For a fixed prompt \(x_i\), the cross-entropy loss is linear in the target:
\[
    \ell(f(x_i),\widehat p_i)
    =
    -\widehat p_i \log f(x_i)
    -
    (1-\widehat p_i)\log(1-f(x_i)).
\]
Since
\[
    \mathbb{E}[\widehat p_i\mid x_i]=p^\star(x_i),
\]
we have
\[
\begin{aligned}
    &\mathbb{E}\!\left[
        \ell(f(x_i),\widehat p_i)
        \mid x_i
    \right] \\
    &=
    -p^\star(x_i)\log f(x_i)
    -
    (1-p^\star(x_i))\log(1-f(x_i)) \\
    &=
    \ell(f(x_i),p^\star(x_i)).
\end{aligned}
\]
Averaging over \(i=1,\ldots,m\) proves the unbiasedness statement.

It remains to control the additional deviation caused by using
\(\widehat p_i\) instead of \(p^\star(x_i)\). For each \(i\),
\[
\begin{aligned}
    &\ell(f(x_i),\widehat p_i)
    -
    \ell(f(x_i),p^\star(x_i)) \\
    &=
    -(\widehat p_i-p^\star(x_i))
    \log f(x_i) \\
    &\quad
    +(\widehat p_i-p^\star(x_i))
    \log(1-f(x_i)) \\
    &=
    (\widehat p_i-p^\star(x_i))
    \log\frac{1-f(x_i)}{f(x_i)} .
\end{aligned}
\]
Because \(f(x_i)\in[\varepsilon,1-\varepsilon]\),
\[
    \left|
        \log\frac{1-f(x_i)}{f(x_i)}
    \right|
    \le
    \log\frac{1-\varepsilon}{\varepsilon}.
\]
Moreover, \(\widehat p_i-p^\star(x_i)\) is the average of \(N\) centered
Bernoulli variables and is therefore sub-Gaussian with variance proxy
\(1/(4N)\). Hence, for any fixed \(f\), Hoeffding's inequality gives
\begin{align*}
    & \left|
        \frac{1}{m}\sum_{i=1}^m
        \left[
            \ell(f(x_i),\widehat p_i)
            -
            \ell(f(x_i),p^\star(x_i))
        \right]
    \right| \\
    & \le
    \log\!\left(\frac{1-\varepsilon}{\varepsilon}\right)
    \sqrt{
        \frac{\log(2/\delta)}{2mN}
    }
\end{align*}
with probability at least \(1-\delta\). Applying a union bound over
\(f\in\mathcal{F}\) yields the stated uniform bound. Combining this sampling
deviation with the standard generalization gap between the empirical ideal
risk and the population risk gives the result.

Finally, the excess-risk bound follows from the standard empirical risk
minimization argument:
\[
    R(\widehat f_N)-R(f^\star)
    \le
    2\sup_{f\in\mathcal{F}}
    \left|
        \widehat R_N(f)-R(f)
    \right|.
\]
\end{proof}

\section{Dataset Construction}
\label{app:dataset}

\subsection{Dataset Selection and Preprocessing}

We evaluate on three QA datasets chosen to span the spectrum of answer formats and reasoning demands. For each dataset, we construct a separate hallucination detection dataset per generating model, as hallucination behaviour is model-dependent.
 
SQuAD ~\cite{rajpurkar-etal-2016-squad} contains over 100,000 extractive reading comprehension questions. Since the proposed methods do not require large training sets and full-scale soft target construction is computationally expensive, we restrict experiments to the official validation split of 10,570 samples, which we randomly partition into training, validation, and test subsets.
 
HotpotQA ~\cite{yang-etal-2018-hotpotqa} contains approximately 113,000 multi-hop questions across three difficulty levels (easy, medium, hard). To construct a balanced evaluation set that controls for difficulty, we sample 3,000 examples from each difficulty level and merge them into a dataset of 9,000 examples, preserving the difficulty distribution across splits.
 
Natural Questions ~\cite{Kwiatkowski2019NaturalQA} contains over 300,000 Wikipedia-sourced questions with both short and long reference answers. Since input paragraphs are often too long for smaller models, we use the long answer as the context and the short answer as the gold target. We sample 9,000 examples uniformly.

\subsection{Dataset Statistics}

\begin{table}[h]
\centering
\begin{tabular}{lccc}
\toprule
\textbf{Dataset} & \textbf{Train} & \textbf{Test} & \textbf{Validation} \\
\midrule
SQuAD       & 8456 & 1057 & 1057 \\
HotpotQA    & 6000 & 1500 & 1500 \\
NQ          & 6000 & 1500 & 1500 \\
\bottomrule
\end{tabular}
\caption{Number of samples in each split for the evaluated datasets.}
\label{tab:samples}
\end{table}

Table ~\ref{tab:samples} reports the number of samples in each split across all datasets used in the paper.

\section{Baseline Implementation}
\label{app:baselines}

\subsection{Question and Context Length}

This heuristic baseline tests whether surface-level properties of the input contain hallucination-relevant information. We use the number of whitespace-tokenized words in the concatenated question and context string. No hidden-state extraction is required.

\subsection{Zero-Shot Self-Check}

This baseline elicits the model's verbalized confidence estimate by prompting it directly, before any answer is generated. The prompt used is specified in Appendix ~\ref{app:prompts}.

Rather than taking a hard binary prediction from the first generated token, we extract a soft hallucination probability directly from the model's next-token distribution. Specifically, let $\mathcal{V}_{\text{no}} = \{\texttt{no}, \texttt{No}, \texttt{NO}\}$ and $\mathcal{V}_{\text{yes}} = \{\texttt{yes}, \texttt{Yes}, \texttt{YES}\}$ denote the sets of case variants for each response. The hallucination risk score is defined as the probability mass on negative responses, normalized over the union of both sets.

This produces a continuous score in $[0, 1]$ rather than a hard binary decision, utilizing the full probability distribution over the relevant token surface forms. No training is performed, this is a purely zero-shot signal. The baseline tests whether the model possesses an accessible internal notion of answerability that can be elicited through prompting alone.

\subsection{Length-Normalized Entropy}

Following Malinin and Gales ~\cite{Malinin2021UncertaintyEI}, we compute the sequence-level entropy of the model's distribution normalized by the sequence length. For a generated sequence $a = (a_1, \dots, a_m)$, the length-normalized entropy is $\text{LNE}(a \mid x) = -\frac{1}{m} \sum_{t=1}^m \sum_{v=1}^V p_\theta(v \mid x, a_{<t}) \log p_\theta(v \mid x, a_{<t})$, where $V$ is the vocabulary size and $p_\theta(v \mid x, a_{<t})$ is the model's next-token distribution at position $t$. This is computed via a single teacher-forced forward pass over the greedy-decoded answer. Length normalization corrects for the tendency of longer sequences to accumulate higher entropy. The resulting scalar is used directly as a hallucination risk score without any learned component. Per-position entropy is computed from the full softmax distribution using $\log\text{-softmax}$ for numerical stability, avoiding explicit computation of $\exp$ followed by $\log$.

\subsection{Linear Probing on Pooled Input Representations}

Following Alnuhait et al. ~\cite{alnuhait-etal-2025-factcheckmate}, we extract hidden states from the generating model at a fixed transformer layer $l$ for the full input sequence (prompt tokens only, no generated answer). The per-layer hidden state matrix $H^{(l)} \in \mathbb{R}^{n \times d}$ is mean-pooled across the token dimension to produce a single vector $\bar{h}^{(l)} \in \mathbb{R}^d$. A single output layer is trained on top of $\bar{h}^{(l)}$.

\subsection{Attention Probing in the Post-Generation Setting}

The post-generation attention probe follows the implementation described originally by CH-Wang et al. ~\cite{ch-wang-etal-2024-androids}. The hidden states are extracted from the answer sequence rather than the input. Specifically, let $H^{(l)}_{\text{answer}} \in \mathbb{R}^{(m) \times d}$ denote the hidden states at layer $l$ for the answer token sequence of length $m$. The learned query $q$ attends over all $m$ positions:
 
$$\hat{h}_i = \sum_{j=1}^{m} \alpha_{ij}\, h_j^{(l)}, \qquad \alpha_{ij} = \frac{\exp(q^\top h_j^{(l)})}{\sum_k \exp(q^\top h_k^{(l)})}$$
 
The logistic regression head and training procedure are identical to the pre-generation setting.

\section{Prompts}
\label{app:prompts}

\subsection{Generation Prompt}

\begin{tcolorbox}[title=Generation Prompt]
\small
\begin{verbatim}
Give a short, concise answer to the question
using the information from the paragraph.

Context:
{context}

Question:
{question}
\end{verbatim}
\end{tcolorbox}

\subsection{Self-Check Prompt}

\begin{tcolorbox}[title=Self-Check Prompt]
\small
\begin{verbatim}
Answer with a single word: 'YES' or 'NO':
are you able to answer the question correctly
using the information from the context?

Context:
{context}

Question:
{question}
\end{verbatim}
\end{tcolorbox}

\section{Statistical Significance of the Obtained Results}

\begin{figure*}
    \centering
    \includegraphics[width=\linewidth]{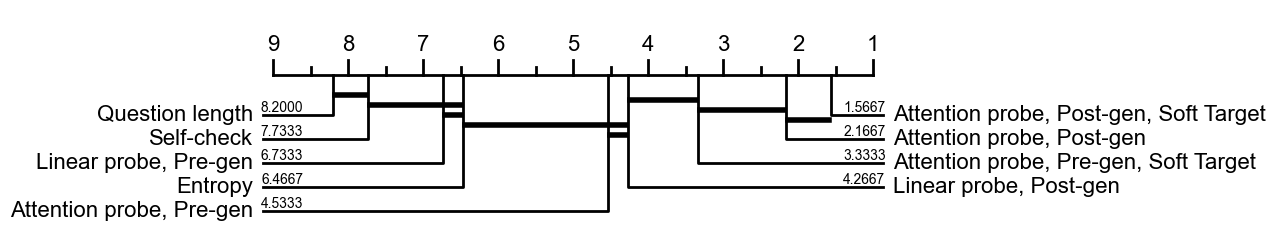}
    \caption{Critical difference diagram for the hallucination detection approaches. The numbers represent the ranks for each method. Thick horizontal lines group methods that are not significantly different.}
    \label{fig:cd_diagram}
\end{figure*}

To assess the statistical significance of the reported differences, we use the critical difference diagram, a computationally efficient non-parametric procedure suitable for comparing multiple classifiers across datasets ~\cite{IsmailFawaz2018deep}. The analysis presented in the Figure ~\ref{fig:cd_diagram} confirms three key findings: 
\begin{itemize}
    \item Pre-generation attention probing trained with soft targets significantly outperforms its binary-target counterpart;
    \item Pre-generation attention probing with soft targets performs comparably to post-generation attention probing trained on binary targets;
    \item Pre-generation attention probing significantly outperforms pre-generation linear probing regardless of supervision target.
\end{itemize}



\section{Soft Target Formulation}
\label{app:soft_target_formulations}

\subsection{Empirical Comparison of Soft Target Formulations}

\begin{table}[h]
\centering
\renewcommand{\arraystretch}{1.3}
\begin{tabular}{p{3cm} >{\centering\arraybackslash}m{1.5cm} >{\centering\arraybackslash}m{2cm}}
\toprule
\textbf{Target} & \textbf{SQuAD} & \textbf{Natural Questions} \\
\midrule
Error rate                                       & \textbf{79.45} & \textbf{74.13} \\
\raggedright Error rate weighted by answer log-prob         &  77.77 & 74.02 \\
\raggedright Relative probability mass of incorrect / correct answers & 66.16 & 73.75 \\
\raggedright Semantic dissimilarity of the greedy answer    & 62.03 & 55.93 \\
\raggedright Mean semantic dissimilarity                      & 54.68 & 58.24 \\
\raggedright Semantic dissimilarity weighted by answer log-prob & 59.31 & 58.90 \\
\bottomrule
\end{tabular}

\caption{Comparison of attention probe trained on different soft target formulations, average ROC-AUC across models}
\label{tab:soft_labels}

\end{table}

All formulations below take as input $N$ answers $\{a^{(j)}\}_{j=1}^N$ sampled i.i.d.\ from $p_\theta(\cdot \mid x_i)$ at temperature $\tau = 1.0$, along with their per-sequence log-probabilities and token counts where required. Let $Z_j = \mathbf{1}[\text{incorrect}(a^{(j)})]$ denote the correctness indicator and let $\tilde{\ell}_j = \log p_\theta(a^{(j)} \mid x_i) / |a^{(j)}|$ denote the per-token log-probability of the $j$-th sample, used to normalize for sequence length.

\paragraph{Error rate.} The simple empirical fraction of incorrect answers: $\hat{y}_i = \frac{1}{N}\sum_{j=1}^N Z_j$. As demonstrated in Section ~\ref{par:soft_target}, it is an unbiased UMVUE for the raw model error probability.

\paragraph{Error rate weighted by answer log-probability.} Samples are reweighted by their softmax-normalized per-token log-probability before aggregation: $\hat{y}_i^{M1} = \sum_{j=1}^N w_j Z_j, \qquad w_j = \frac{\exp(\tilde{\ell}_j)}{\sum_k \exp(\tilde{\ell}_k)}$. Since all samples are drawn from $p_\theta(\cdot \mid x_i)$, the uniform weight $1/N$ is the natural sampling weight. Replacing it with $w_j$ is equivalent to self-normalized importance sampling with the same proposal and target distribution, introducing a systematic bias toward over-representing high-probability incorrect answers.

\paragraph{Relative probability mass of incorrect vs. correct answers.} The log-probability gap between incorrect and correct groups, passed through a sigmoid: $\hat{y}_i^{M2} = \sigma\!\left(\bar{\tilde{\ell}}_{\text{incorr}} -\bar{\tilde{\ell}}_{\text{corr}}\right)$, where $\bar{\tilde{\ell}}_{\text{incorr}}$ and $\bar{\tilde{\ell}}_{\text{corr}}$ are the mean per-token log-probabilities of incorrect and correct samples respectively. This measures how much more probability mass the model concentrates on incorrect answers, rather than the absolute frequency of incorrectness. When all samples fall in one class, the missing group's mean is approximated by the per-token log-probability of the ground-truth answer shifted by a small offset $\epsilon = 0.1 \cdot |\tilde{\ell}_{\text{gt}}|$.

\paragraph{Semantic dissimilarity of the greedy answer.} A single-sample estimate based on the greedy-decoded answer $a^\dagger$: $\hat{y}_i^{M3} = 1 - \text{sim}(a^\dagger,\, g_i)$, where $g_i$ is the gold reference answer and $\text{sim}$ denotes cosine similarity under \textit{nli-roberta-large} sentence embeddings. This requires no sampling and no correctness threshold, but provides only a single-point estimate with $\mathcal{O}(1)$ variance regardless of $N$.

\paragraph{Mean semantic dissimilarity.} The sample mean of per-answer semantic dissimilarity: $\hat{y}_i^{M4} = \frac{1}{N}\sum_{j=1}^N \left(1 - \text{sim}(a^{(j)},\, g_i)\right)$. Unlike the error rate, this is an unbiased estimator of $\mathbb{E}_{a \sim p_\theta}[1 - \text{sim}(a, g_i)]$ rather than of $p_i^*$. The two quantities diverge whenever semantic proximity and correctness are misaligned. For example, a factually wrong answer can be semantically close to the correct one, and a correct paraphrase can be semantically distant from the gold string.

\paragraph{Semantic dissimilarity weighted by answer log-probability.} The log-probability-weighted version of mean semantic dissimilarity:
$\hat{y}_i^{M5} = \sum_{j=1}^N w_j\left(1 - \text{sim}(a^{(j)},\, g_i)\right), \qquad w_j = \frac{\exp(\tilde{\ell}_j)}{\sum_k \exp(\tilde{\ell}_k)}$. This compounds the biases of importance weight distortion and similarity-correctness mismatch, with no cancellation between the two sources of bias in general.

\subsection{Ablation on the Number of Samples}

\begin{table}[h]
\centering
\begin{tabular}{lcc}
\toprule
& \textbf{Llama-2-7B} & \textbf{Qwen2.5-7B} \\
\midrule
$N = 3$  & 57.47 & 52.47 \\
$N = 5$  & 58.17 & 53.82 \\
$N = 10$ & \textbf{59.07} & \textbf{58.26} \\
\bottomrule
\end{tabular}
\caption{Ablation on the number of samples $N$ used to construct soft targets, evaluated on HotpotQA for two representative models, ROC-AUC.}
\label{tab:n_ablation}
\end{table}

Table ~\ref{tab:n_ablation} reports detection quality as a function of the number of samples $N$ used to estimate the per-prompt error rate. Performance increases monotonically with $N$ for both models, consistent with the theoretical expectation: as $N$ grows, the empirical error rate converges to the true error probability $p_i^*$, reducing estimator variance and providing a cleaner supervision signal. While the absolute gains are modest, the trend is consistent across both model families, supporting the use of $N = 10$ as the default in the main experiments.

\section{Computational Cost Analysis}
\label{app:compute_cost}




\begin{figure}
    \centering
    \includegraphics[width=\linewidth]{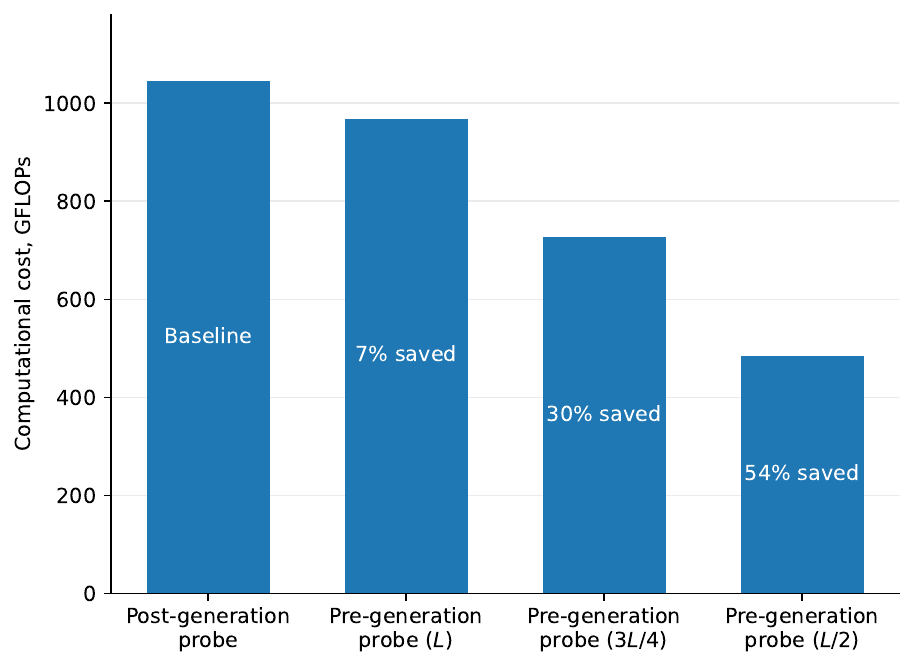}
    \caption{Mean GFLOPs and cost savings relative to pre-generation probing at full depth ($L$) and relative to post-generation probing, averaged across five models at median prompt and response lengths.}
    \label{fig:flops}
\end{figure}

We stated that pre-generation hallucination detection provides the opportunity for early detection. Figure ~\ref{fig:first} demonstrates that in general, middle layers contain the majority of the information about the hallucination, thus suggesting detection on early layers. To quantify the reduction of the inference cost, we estimate the theoretical FLOPs for detection on different layers at median prompt and response lengths from our evaluation datasets and average them across models. Figure ~\ref{fig:flops} demonstrates both absolute FLOPs and relative computational costs savings for different layers.

\section{Results on Additional Datasets}
\label{app:additional_results}

\begin{table*}[t]
\centering
\begin{tabular}{lcc}
\toprule
\textbf{Method / Model} & \textbf{Qwen2.5-3B} & \textbf{Qwen2.5-7B} \\
\midrule
\textbf{Post-generation} \\
\midrule
Entropy           & \textbf{64.99} & 72.32 \\
Linear probe      & 64.03 & 69.18 \\
Attention probe   & 50.00 & \textbf{75.42} \\
\midrule
\textbf{Pre-generation} \\
\midrule
Question length               & 59.21 & 60.15 \\
Self-check, zero-shot         & 53.73 & 60.20 \\
Linear probe                  & \textbf{63.89} & 69.23 \\
Attention probe               & 50.00 & \textbf{74.40} \\
Attention probe, soft target  & 53.55 & 56.95 \\
\bottomrule
\end{tabular}

\caption{Performance comparison across different models and methods on MMLU-Pro dataset, ROC-AUC.}

\label{tab:model_comparison_mmlu-pro}
\end{table*}

\begin{table*}[t]
\centering
\begin{tabular}{lcccc}
\toprule
\textbf{Method / Model} & \textbf{Qwen2.5-3B} & \textbf{Qwen2.5-7B} & \textbf{Llama-2-7B} & \textbf{Mistral-7B} \\
\midrule
\textbf{Post-generation} \\
\midrule
Entropy           & 68.58 & 78.35 & 62.63 & 51.08 \\
Linear probe      & 91.42 & \textbf{94.76} & \textbf{90.66} & 92.24 \\
Attention probe   & \textbf{92.18} & 94.46 & 90.94 & \textbf{93.18} \\
\midrule
\textbf{Pre-generation} \\
\midrule
Question length   & 51.08 & 50.85 & 51.00 & 52.52 \\
Self-check, zero-shot & 63.36 & 64.19 & 52.65 & 61.11 \\
Linear probe          & 69.65 & 82.25 & 68.64 & 78.99 \\
Attention probe       & \textbf{92.03} & \textbf{84.23} & \textbf{89.88} & \textbf{93.21} \\
\bottomrule
\end{tabular}
\caption{Performance comparison across different models and methods on BoolQ dataset, ROC-AUC.}
\label{tab:model_comparison_boolq}
\end{table*}

\begin{table*}[t]
\centering

\begin{tabular}{lccc}
\toprule
\textbf{Method / Model} & \textbf{Llama-2-7B} & \textbf{Llama-2-13B} & \textbf{Mistral-7B} \\
\midrule
\textbf{Post-generation} \\
\midrule
Entropy                        & 62.63 & 52.34 & 66.71 \\
Linear probe                   & 53.63 & 60.71 & 59.84 \\
Attention probe                & 56.37 & 60.41 & 65.31 \\
\midrule
\textbf{Pre-generation} \\
\midrule
Question length        & 55.18 & 53.01 & 52.47 \\
Self-check, zero-shot              & 53.97 & 52.33 & 54.85 \\
Linear probe                       & 53.58 & 61.04 & 59.87 \\
Attention probe                    & \textbf{56.20} & \textbf{67.34} & \textbf{65.24} \\
\bottomrule
\end{tabular}

\caption{Performance comparison across different models and methods on RAGTruth dataset, ROC-AUC.}
\label{tab:model_comparison_ragtruth}
\end{table*}

\paragraph{Datasets.} We report results on three additional datasets: MMLU-Pro ~\cite{mmlu-pro} (10-way multiple choice), BoolQ ~\cite{clark2019boolq} (binary yes/no), and RAGTruth ~\cite{ragtruth} (long-form retrieval-augmented generation). Due to computational constraints, these experiments cover a subset of the models evaluated in the main text; the specific models are indicated in each table.

\paragraph{Scope of evaluation.} We evaluate binary-target probes only on all three datasets. Soft-target supervision is not evaluated here for two reasons. First, MMLU-Pro and BoolQ require single-token answers, and the experiments demonstrated that soft targets provide no benefit, even degrading performance on such tasks (Table ~\ref{tab:model_comparison_mmlu-pro}). Second, RAGTruth does not provide an automated correctness-checking pipeline, which is a prerequisite for soft-target construction.

\paragraph{Results.} Tables ~\ref{tab:model_comparison_mmlu-pro}, ~\ref{tab:model_comparison_boolq}, and ~\ref{tab:model_comparison_ragtruth} report ROC-AUC scores across datasets and models. The results are consistent with two findings from the main text. First, pre-generation attention probing achieves detection quality comparable to post-generation methods across all three datasets, supporting the conclusion that hallucination risk is substantially encoded in the input representations on these tasks. Second, attention probing consistently outperforms linear probing in the pre-generation setting, confirming that learned aggregation provides a more informative representation of prompt-level uncertainty than uniform pooling.


\end{document}